\newtheoremstyle{exampstyle}
  {3pt} 
  {3pt} 
  {\itshape} 
  {} 
  {\bfseries} 
  {.} 
  {.5em} 
  {} 
\theoremstyle{exampstyle} 
\newtheorem{definition}{Definition}
\newtheorem{lemma}{Lemma}
\newtheorem{theorem}{Theorem}
\newtheorem{remark}{Remark}
\newtheorem{assumption}{Assumption}
\newtheorem{problem}{Problem}
\theoremstyle{plain}
\definecolor{mumred}{RGB}{222,33,77}
\definecolor{mumgreen}{RGB}{0, 140, 0}
\definecolor{mumblue}{RGB}{0, 100, 222}
\definecolor{mumpurple}{RGB}{128, 0, 128}
\newcommand{\ofx}{\left(\bm{x}\right)}
\newcommand{\xb}{\bm{x}}
\newcommand{\ub}{\bm{u}}
\newcommand{\ddt}{\frac{\mathrm{d}}{\mathrm{d}t}}
\newcommand{\ddtau}{\frac{\mathrm{d}}{\mathrm{d}\tau}}
\DeclareMathOperator*{\argmin}{arg\,min}
\title{\LARGE \bf
Forward Invariance in Trajectory Spaces for Safety-critical Control
}
\author{Matti Vahs, Rafael I. Cabral Muchacho, Florian T. Pokorny and Jana Tumova
	\thanks{This work was partially supported by the Wallenberg AI, Autonomous
		Systems and Software Program (WASP) funded by the Knut and Alice
		Wallenberg Foundation. This research has been carried out as part of the Vinnova Competence Center for Trustworthy Edge Computing Systems and Applications at KTH Royal Institute of Technology.
  }
	\thanks{The authors are with the Division of Robotics, Perception and Learning, KTH Royal Institute of Technology, Stockholm, Sweden and also affiliated with Digital Futures. Mail addresses: {\{\tt\small vahs, ricm, fpokorny, tumova\}}
		{\tt\small @kth.se}}%
}
\begin{document}
	\maketitle
	\thispagestyle{empty}
	\pagestyle{empty}

	\begin{abstract}
        Useful robot control algorithms should
        not only achieve performance objectives but also adhere to hard safety constraints.
        Control Barrier Functions (CBFs) have been developed to provably ensure system safety through forward invariance. However, they often unnecessarily sacrifice performance for safety since they are purely reactive.
        Receding horizon control (RHC), on the other hand, consider planned trajectories to account for the future evolution of a system. 
        This work provides a new perspective on safety-critical control by introducing Forward Invariance in Trajectory Spaces (FITS). 
        We lift the problem of safe RHC into the trajectory space and describe the evolution of planned trajectories as a controlled dynamical system. Safety constraints defined over states can be converted into sets in the trajectory space which we render forward invariant via a CBF framework. We derive an efficient quadratic program (QP) to synthesize trajectories that provably satisfy safety constraints. 
        Our experiments support that FITS improves the adherence to safety specifications without sacrificing performance over alternative CBF and NMPC methods.
        
	\end{abstract}

\section{Introduction}
When deploying robots in the real world, we must ensure that they operate as intended and satisfy desired safety constraints such as obstacle avoidance. This gave rise to the field of safety-critical control where control laws are designed that keep the system provably safe. Prominent methods that have drawn a lot of attention in recent years are Control Barrier Functions (CBFs) \cite{ames2016control} which treat safety-critical control as an invariance problem. Specifically, CBFs define constraints on the system's state and are used to obtain a reactive controller that, if applied in continuous time, guarantees forward invariance of the constraint set. However, CBFs are purely reactive, which often results in sudden braking maneuvers as they do not take into account the future evolution of a system \cite{garg2024advances}. Therefore, performance is often unnecessarily sacrificed for safety.

Model Predictive Control (MPC) \cite{grune2017nonlinear}, on the other hand, reasons about the evolution of a dynamical system allowing for proactive behavior to avoid unsafe states. While this allows for a flexible formulation of safety-critical control, the resulting optimization is often nonlinear and nonconvex, which complicates efficient and rigorous control design.

Several existing works have addressed the issue of missing planning horizons in CBFs through path functions \cite{breeden2022predictive}, trajectory libraries \cite{infinitesimal2024}, or combinations of NMPC with discrete-time CBF constraints \cite{wabersich2022predictive, zeng2021safety}. Other works have looked at definitions of CBFs in alternative domains such as Belief Spaces \cite{vahs2023belief, vahs2024risk} which motivates the perspective taken in this work.
Specifically, our perspective on safety-critical receding horizon control (RHC) is based on the observation that planned trajectories typically do not vary a lot between two consecutive control steps. 
Following this thought, we can think of the planned state trajectory $\mathcal{T}_x$ itself as a dynamical system that we can control, i.e. we can apply a change in the input trajectory of the system to affect the resulting state trajectory. This concept is highlighted in Fig.~\ref{fig:Intro} for a quadrotor system where a continuous change in the input trajectory $\dot{\mathcal{T}}_u$ leads to a continuous change in the state trajectory, i.e. $\dot{\mathcal{T}}_x$. 

Consequently, we can lift the problem of safe RHC into the trajectory space, i.e. the space of all possible planned trajectories. We formulate a control synthesis problem in trajectory space that guarantees the safe evolution of planned trajectories. Thus, we will show that the problem of safe RHC can be considered as an \textit{invariance problem in trajectory space.} Figure~\ref{fig:Intro} illustrates this concept where the current planned trajectory can be seen as a state in a higher dimensional trajectory space, indicated by the red point.

\begin{figure}[t]
    \centering
    \includegraphics[scale=1.1]{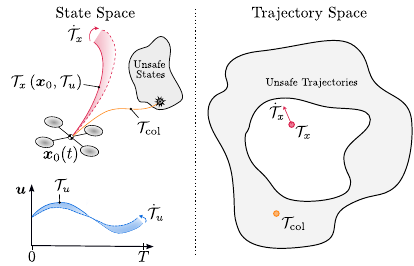}
    \caption{Illustration of trajectory spaces for a quadrotor where $\mathcal{T}_x$ represents the current planned trajectory in state space, $\mathcal{T}_u$ is the current control input trajectory of length $T$ and $\xb_0(t)$ is the current state of the quadrotor. The white set in trajectory space indicates the set of all safe planned trajectories and $\mathcal{T}_{\textrm{col}}$ is an exemplary unsafe trajectory.}
    \label{fig:Intro}
    \vspace{-0.5cm}
\end{figure}

We introduce \textit{forward invariance in trajectory spaces} (FITS), specifically, we
\begin{enumerate}
    \item overcome reactive behavior of CBFs by considering the future evolution of a system in a trajectory space,
    \item propose an efficient QP formulation in the trajectory space that guarantees safety over states proactively, and
    \item show in experiments that safety constraints can be enforced without major design effort.
\end{enumerate}

\section{Related Work}
 We review two safety-critical control approaches relevant to our approach, namely CBFs and NMPC.

Predictive CBFs \cite{breeden2022predictive} have been introduced as a generalization of future-focused CBFs \cite{black2023future} to address the reactive nature of CBFs. The authors consider the evolution of a trajectory under a nominal control law which is allowed to be unsafe. Whenever this trajectory leads to future safety violations, the current control input is modified to proactively avoid unsafe states while assuming no actuation constraints. In contrast, we formulate safe sets in trajectory space that encode both, safety and actuation constraints.

Backup CBFs \cite{gurriet2018online, chen2021backup} guarantee the existence of a trajectory that will lead to a control invariant set. Specifically, to ensure future safety, \cite{gurriet2018online} leverages a flow map, i.e. the solution to an initial value problem of an ODE under the safe backup policy. The main difficulty is to find a valid backup policy which typically requires domain specific knowledge. In \cite{wiltz2023construction}, Hamilton-Jacobi reachability analysis is used to find a valid backup policy. Our approach does not require the definition of a backup policy. Instead, we directly optimize over trajectory changes to ensure future safety.


To combine CBF and RHC approaches, \cite{zeng2021safety} introduced discrete-time CBF conditions as constraints in an NMPC problem. If a valid discrete-time CBF is available, the approach can guarantee safety along a horizon given that the optimization problem is always feasible. 
An empirical investigation has been carried out in \cite{zeng2021enhancing} which shows that the CBF condition generally enhances the feasibility. 
The authors of \cite{wabersich2022predictive} leverage model predictive safety filters that enforce terminal states of planned trajectories to lie within a control invariant set which is closely related to backup CBFs. 
However, NMPC tends to be computationally expensive as it requires solving a nonlinear program, leading to the introduction of more efficient yet inexact methods \cite{gros2020linear, zanelli2021inexact, numerow2024inherently}.

To overcome the computational burden of NMPC, continuation methods \cite{ohtsuka2004continuation, shen2016modified} have been introduced for RHC. 
These methods address problems with equality constraints exclusively by formulating a system of equations to determine the optimal control trajectory. The authors of \cite{ohtsuka2004continuation} exploit the recursive structure of RHC and look at the time derivative of the system of equations. This formulation leads to a dynamical system that can be efficiently integrated to obtain a control law. As the continuation method only allows for equality constraints, safety and actuation constraints which are formulated as inequalities cannot be handled. In contrast, we model the planned trajectories as dynamical system which allows for inequality constraints on the trajectory state.

In \cite{infinitesimal2024}, parameterized trajectory libraries are used to reformulate a RHC problem into a parameter optimization. Finding a trajectory library that satisfies the assumptions of always ending in an equilibrium state \cite{infinitesimal2024} is difficult for general nonlinear systems. In our work, we directly optimize for changes in the input trajectory which makes it applicable to a broader class of systems.

\section{Background}
In this section we develop the notation used throughout the paper and present the necessary technical background to formalize the problem.

Consider a general nonlinear control system defined by the ordinary differential equation (ODE) 
\begin{align}
    \ddtau \xb_{\tau}  = \bm{F}(\xb_{\tau}, \ub_{\tau}), \label{eq:control_system_dynamics}
\end{align}
with state $\xb \in \mathcal{X} \subseteq \mathbb{R}^{n_x}$, input $\ub \in \mathcal{U} \subseteq \mathbb{R}^{n_u}$ and a continuously differentiable deterministic function $\bm{F}: \mathcal{X} \times \mathcal{U} \to \mathbb{R}^{n_x}$. 
By encoding safety through a continuously differentiable constraint function $h: \mathcal{X} \rightarrow \mathbb{R}$, a safe set $\mathcal{C} \subseteq \mathcal{X}$ can be constructed as the 0-superlevel set of $h$ such that
\begin{equation}
\begin{aligned}
  \label{eq:safeset_state}
    \mathcal{C} &= \left\{\bm{x} \in \mathcal{X} \mid h\left(\bm{x}\right) \geq 0\right\},\\
    \partial \mathcal{C} &= \left\{\bm{x} \in \mathcal{X} \mid h\left(\bm{x}\right) = 0\right\}.
 \end{aligned}
\end{equation} 
\begin{definition}
A safe set $\mathcal{C}$ is forward invariant with respect to the system \eqref{eq:control_system_dynamics} if for every initial condition $\bm{x}_{\tau_0} \in \mathcal{C}$
it holds that $\bm{x}_\tau \in \mathcal{C}, \forall \tau \geq \tau_0$.
\end{definition}
A prominent approach in safe control is to render safe sets \textit{forward invariant} by using CBFs.
\begin{definition}
\label{def:CBF}
Given a set $\mathcal{C}$, defined by Eq. \eqref{eq:safeset_state}, $h$ serves as a zeroing CBF for the system \eqref{eq:control_system_dynamics} if  for all $\bm{x}$ satisfying $h\left(\bm{x}\right) \geq 0$, there exists a $\bm{u} \in \mathcal{U}$ such that
\begin{align*}
    \ddtau h(\xb_{\tau}) = \frac{\partial h}{\partial \bm{x}} \left(\bm{F}\left(\bm{x}_{\tau}, \bm{u}_{\tau}\right)\right) \geq - \alpha\left(h\left(\bm{x}_{\tau}\right)\right),
\end{align*}
where $\alpha$ is a class-$\kappa$ function,
i.e. continuous and strictly increasing with $\alpha(0)=0$.
\end{definition}
 If a valid CBF exists, it follows that a controller satisfying Def.~\ref{def:CBF} renders $\mathcal{C}$ forward invariant \cite{ames2016control}. In the case of control affine dynamics of the form $\bm{F}(\xb, \ub) = \bm{f}\ofx + \bm{g}\ofx \ub$, we can formulate the control synthesis as quadratic program (QP)
\begin{equation}
\begin{aligned}
    \bm{u}_{\tau}^* = \argmin_{\ub_{\tau} \in \mathcal{U}} \quad & \left(\ub_{\tau} - \ub_{\text{ref}}\right)^T {\bm{Q}} \left(\ub_{\tau} - \ub_{\text{ref}}\right)\\
    \textrm{s.t.~~} \quad & \frac{\partial h}{\partial \xb} \left(\bm{f}(\xb_{\tau}) + \bm{g}(\xb_{\tau}) \ub_{\tau} \right) \geq - \alpha(h(\xb_{\tau})),
\end{aligned}
\label{eq:QP}
\end{equation}
where $\ub_{\text{ref}}$ is a reference controller and $\bm{Q}$ denotes a weighting matrix.
\begin{theorem}
    \label{thm:CBF}
    If the QP in Eq. \eqref{eq:QP} is always feasible, the set $\mathcal{C} = \left\{\xb \in \mathcal{X} \mid h\ofx \geq 0\right\}$ is forward invariant, i.e. $\xb_{\tau_0} \in \mathcal{C} \implies \xb_\tau \in \mathcal{C} ~~\forall \tau \geq \tau_0$.
\end{theorem}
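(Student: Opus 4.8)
The plan is to reduce Theorem~\ref{thm:CBF} to the standard comparison argument that underlies the zeroing CBF guarantee in \cite{ames2016control}. The key observation is that feasibility of the QP in Eq.~\eqref{eq:QP} at every point $\xb_\tau$ with $h(\xb_\tau)\ge 0$ gives us, via the constraint in Eq.~\eqref{eq:QP}, a measurable (indeed continuous, since the QP with a strictly convex quadratic cost has a unique minimizer depending continuously on $\xb_\tau$) feedback law $\ub^*(\xb_\tau)$ such that the closed-loop vector field $\bm{F}(\xb_\tau,\ub^*(\xb_\tau)) = \bm{f}(\xb_\tau)+\bm{g}(\xb_\tau)\ub^*(\xb_\tau)$ satisfies $\frac{\partial h}{\partial \xb}\bm{F}(\xb_\tau,\ub^*(\xb_\tau)) \ge -\alpha(h(\xb_\tau))$ exactly as in Def.~\ref{def:CBF}. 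Hence $h$ is a valid zeroing CBF in the sense of Def.~\ref{def:CBF}, and the closed-loop dynamics $\ddtau\xb_\tau = \bm{F}(\xb_\tau,\ub^*(\xb_\tau))$ admit (at least local) solutions for the given initial condition.

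First I would define $\eta(\tau) \coloneqq h(\xb_\tau)$ along a closed-loop solution and differentiate: $\dot{\eta}(\tau) = \frac{\partial h}{\partial \xb}\bm{F}(\xb_\tau,\ub^*(\xb_\tau)) \ge -\alpha(\eta(\tau))$ for all $\tau$ with $\eta(\tau)\ge 0$. Next I would invoke the comparison lemma: letting $y(\tau)$ solve $\dot{y} = -\alpha(y)$ with $y(\tau_0)=\eta(\tau_0)=h(\xb_{\tau_0})\ge 0$, we have $\eta(\tau)\ge y(\tau)$ for all $\tau\ge\tau_0$ in the interval of existence. Since $\alpha$ is class-$\kappa$ (continuous, strictly increasing, $\alpha(0)=0$), the origin is stable for $\dot{y}=-\alpha(y)$ and the nonnegative orthant is invariant for it, so $y(\tau)\ge 0$ for all $\tau\ge\tau_0$; therefore $\eta(\tau) = h(\xb_\tau)\ge 0$, i.e. $\xb_\tau\in\mathcal{C}$ for all $\tau\ge\tau_0$ for which the solution exists. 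A short forward-completeness argument (the constraint keeps trajectories in $\mathcal{C}$, which we may take to be such that solutions do not escape in finite time, or simply restrict the claim to the maximal interval of existence) then gives the stated conclusion.

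The main obstacle I anticipate is the regularity/existence issue rather than the comparison step: the QP-based feedback $\ub^*(\cdot)$ is only guaranteed Lipschitz (hence classical solutions exist) under additional conditions — local Lipschitzness of $\bm{f}$, $\bm{g}$, $\alpha$ and a constraint-qualification/Mangasarian–Fromovitz-type condition ensuring the minimizer varies Lipschitz-continuously with the state. If one does not want to assume this, one has to work with Carathéodory or Krasovskii solutions and use a nonsmooth version of the comparison lemma, or else simply \emph{assume} the closed-loop system admits solutions (as is standard in the CBF literature) and apply the comparison argument along any such solution. I would state this as a standing regularity assumption and then present the comparison-lemma argument as the proof proper, citing \cite{ames2016control} for the continuous-time invariance result so that the present theorem reads as a direct specialization.
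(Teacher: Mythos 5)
Your argument is correct and is essentially the standard proof the paper defers to: the paper itself gives no proof of Theorem~\ref{thm:CBF} and simply cites \cite{ames2016control}, whose reasoning is exactly your comparison-lemma argument (QP feasibility yields a feedback satisfying the zeroing-CBF inequality, then $\dot h \ge -\alpha(h)$ plus the comparison system $\dot y = -\alpha(y)$ keeps $h$ nonnegative). Your explicit flagging of the regularity/existence caveats (Lipschitz QP feedback, forward completeness) matches the standing assumptions made in that reference, so nothing further is needed.
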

\noindent For a detailed proof, the reader is referred to \cite{ames2016control}.

\subsection{Input and State Trajectories}
We define the state trajectory $\mathcal{T}_x^{I}$ and the input trajectory $\mathcal{T}_u^{I}$ on a closed interval $I=[a, b]$ as
\begin{align*}
    \mathcal{T}_x^I &:= \{\bm{x}_\tau \in \mathcal{X} \mid \tau \in I \} \in \mathcal{X}^I \subseteq L^2(I, \mathcal{X}) \\
    \mathcal{T}_u^I &:= \{\ub_\tau \in \mathcal{U} \mid \tau \in I \} \in \mathcal{U}^I \subseteq L^2(I, \mathcal{U}),
\end{align*}
where $\mathcal{X}^I$ and $\mathcal{U}^I$ are the sets of possible trajectories that are also \textit{square integrable} ($L^2$) on the interval $I$. 

A state $\xb_\tau$ at time $\tau \in I$ is fully determined by the dynamics \eqref{eq:control_system_dynamics}, the initial condition $\xb_a \in \mathcal{X}$ and the input trajectory $\mathcal{T}_u^{[a, \tau]}$ as
\begin{align}
    \xb_\tau = \varphi\left(\xb_a, \mathcal{T}_u^{[a, \tau]}\right) \coloneq \xb_a + \int_{a}^\tau \bm{F}\left(\xb_{\tilde{\tau}}, \ub_{\tilde{\tau}}\right) \mathrm{d}\tilde{\tau},     \label{eq:ODEInt}
\end{align}
thus the state trajectory can be obtained by integrating the ODE in Eq.~\eqref{eq:control_system_dynamics}, i.e.
\begin{align*}
    \mathcal{T}_x^{I} &= \left\{\varphi\left(\xb_a, \mathcal{T}_u^{[a, \tau]}\right) \in \mathcal{X} \mathrel{\Big|} \tau \in I \right\}.
\end{align*}  
Note that state trajectories are continuous, i.e. $\mathcal{X}^I \subseteq C^0(I, \mathcal{X})$ for any square integrable input trajectory, which follows from the continuous differentiability assumption on the dynamics \eqref{eq:control_system_dynamics}.

We introduce the variable $t$ to denote \textit{system time} which describes the temporal evolution in the real world. 
In contrast, we use \textit{trajectory time} $\tau$, to describe the time evolution of a state $\xb_{\tau}$ along a state trajectory $\mathcal{T}_x^I$ within the interval~$I$.


Consequently, the state and input trajectories become a function of system time, i.e. $\mathcal{T}_x^I(t)$ and $\mathcal{T}_u^I(t)$, respectively. 
Figure~\ref{fig:state_trajectory_space} illustrates a one-dimensional example of a state trajectory $\mathcal{T}_x^I(t)$ varying over system time $t$.
In the remainder of this paper, we use subscripts, i.e. $\xb_{\tau}$, to denote trajectory time and parentheses, i.e. $\xb(t)$, to denote system time. 
As also visualized in Figure~\ref{fig:state_trajectory_space}, the state $x_{\tau_p}(t_p)$ denotes the state at system time $t_p$ and trajectory time $\tau_p$.

\section{Problem Statement}
A trajectory $\mathcal{T}_x^I(t)$ that is planned in a receding horizon fashion should be safe at all times $t$ which we formally define hereafter.
\begin{problem}
Given a dynamical system in Eq.~\eqref{eq:control_system_dynamics} and a planning horizon $I=[0, T]$, ensure that $\forall t \geq 0$
\begin{align}
    h_i(\bm{x}_{\tau}) &\geq 0,~\forall i=0,\dots, m-1, \forall \bm{x}_{\tau} \in \mathcal{T}^I_x(t),\label{eq:rhc_constraints}\\
    \ub_{\mathrm{min}} &\leq \ub_{\tau} \leq \ub_{\mathrm{max}} \hspace{1.62cm}\forall \ub_{\tau} \in \mathcal{T}^I_u(t), \label{eq:input_constraints}
\end{align}
where $m$ inequality constraints encode safety specifications and Eq.~\eqref{eq:input_constraints} denotes actuation constraints.
\end{problem}
\noindent We want to solve this problem 
through the framework
of control barrier functions to benefit from their formal guarantees. To that end, we lift the problem to the trajectory space, i.e. the space of all possible planned trajectories.
\subsection{Trajectory Space Dynamics}
We introduce the continuous-time dynamics of trajectories $\mathcal{T}_x^I$ and $\mathcal{T}_u^I$, respectively.


To study the change of the trajectories over time we introduce the derivative along system time $t$ of the corresponding state trajectory $\mathcal{T}_x^I(t)$, reading 
\begin{align}
    \dot{\mathcal{T}}_x^I(t) &\coloneq \left\{\ddt \varphi\left(\xb_0(t), \mathcal{T}_u^{[0, \tau]}(t)\right) \mathrel{\Big|} \tau \in I\right\}\label{eq:Tx}\\
    &= \left\{\frac{\partial \varphi}{\partial \xb_0(t)} \dot{\xb}_0(t) + \int_{0}^{\tau}\frac{\partial \varphi}{\partial \ub_{\hat{\tau}}(t)} \dot{\ub}_{\hat{\tau}}(t) \, \mathrm{d}\hat{\tau} \mathrel{\Big|} \tau \in I\right\} \label{eq:partials} \\
    \dot{\mathcal{T}}_u^I(t) &= \mathcal{T}_v^I(t) = \left\{\bm{v}_{\tau}(t) \in \mathbb{R}^{n_u} \mid \tau \in I\right\} \in \mathcal{V}^I,\label{eq:Tu}
\end{align}
where we use the ${(\cdot)}$ notation for the derivative with respect to actual time $t$, and $\mathcal{T}_v^I$ denotes the trajectory of a virtual input $\bm{v}$. This virtual input controls the rate of change of the input trajectory, i.e. $\dot{\ub}_{\tau}(t)=\bm{v}_{\tau}(t)$.
The existence of the partial derivatives in the definition \eqref{eq:partials} follows from the assumption on continuously differentiable dynamics \eqref{eq:control_system_dynamics}. 

Thus, we view the planned trajectories $\mathcal{T}_x^I$ and $\mathcal{T}_u^I$ as dynamical systems that we can affect through the virtual input $\bm{v}$.
Having introduced this dynamical system, we can reformulate the original problem as follows.
\begin{problem}
    Given the dynamical system of trajectories in Eq.~\eqref{eq:Tx}-\eqref{eq:Tu} and
    safety constraints \eqref{eq:rhc_constraints}-\eqref{eq:input_constraints},
    find a safe set $\mathcal{C}$ in trajectory space $\mathcal{X} \times \mathcal{U}^I$ and a corresponding control law $\pi_v: \mathcal{X} \times \mathcal{U}^I \mapsto \mathcal{V}^I$ that ensures the satisfaction of safety constraints by rendering the safe set forward invariant.
\end{problem}
By solving Problem 2, we ensure that the planned trajectories remain in a safe set in trajectory space at all times which, in turn, would solve Problem 1.


\begin{figure}
    \centering
    \includegraphics[width=\linewidth]{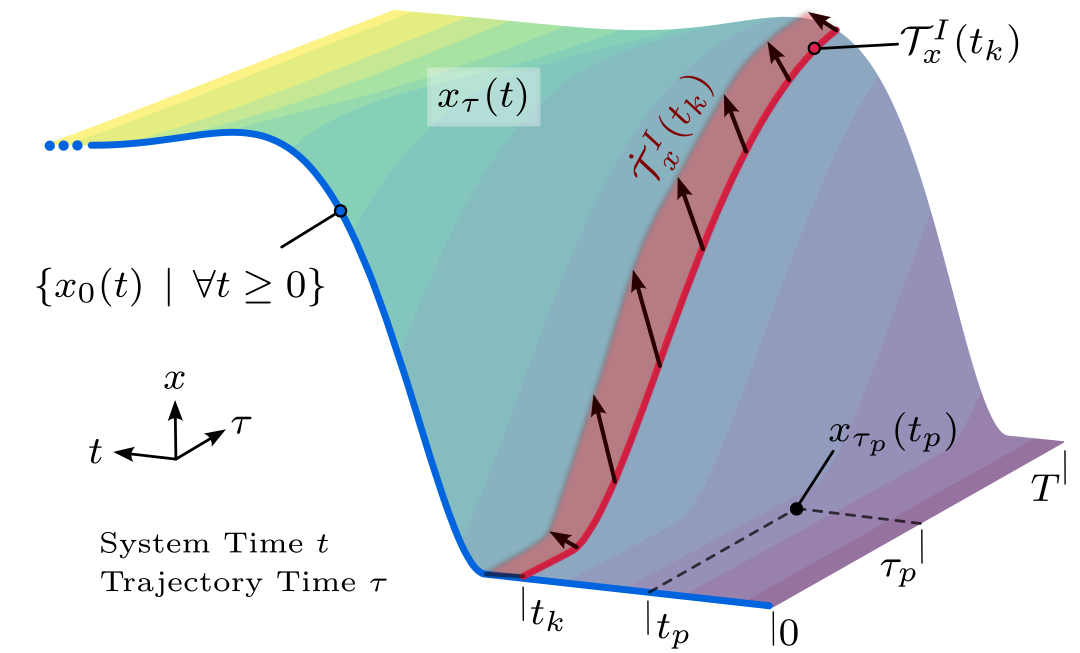}
    \caption{Notation used for time variables and the trajectory space dynamics for a one-dimensional state space example. The surface color changes along the system time $t$. We denote by $x_{\tau_p}(t_p)$ the state at system time $t_p$ and trajectory time $\tau_p$. The time derivative of the state trajectory at time $t_k$ within the planning interval, i.e., $\dot{\mathcal{T}}_x^I(t_k)$, is shown as a vector field (shaded in red) on the corresponding state trajectory $\mathcal{T}_x^I(t_k)$. The executed trajectory $\{x_0(t) \mid \forall t \geq 0 \}$ is shown in blue.}
    \vspace{-0.5cm}
    \label{fig:state_trajectory_space}
\end{figure}

\section{Forward Invariance in Trajectory Spaces}
To solve the aforementioned problem, we introduce Forward Invariance in Trajectory Spaces (FITS) which treats receding horizon control as a dynamical system in the trajectory space. Our approach can be summarized as follows: We 1) define a trajectory configuration space $\mathcal{S}$ with a finite dimensional state $\bm{s} \in \mathcal{S}$ that uniquely describes the current planned trajectory $\mathcal{T}_x^I$ and obtain its dynamics, 2) we convert the original safety and actuation constraints into sets in the space $\mathcal{S}$ and 3) we ensure that these sets are forward invariant.

\subsection{Trajectory State Parameterization}
We introduced the dynamics of planned trajectories as a dynamical system over continuous functions in Eq.~\eqref{eq:Tx}-\eqref{eq:Tu} which describes an infinite dimensional dynamical system. Therefore, the goal of this section is to find a finite representation that we can use for control design.

We consider 
piecewise constant input trajectories of $N$ steps on the interval $I=[0, T]$
\begin{align*}
    \mathcal{T}_u^I(t) = \left\{\ub_i\mid \forall \tau \in [\tau_i, \tau_{i+1}), \forall i = 0, \dots, N-1\right\} \in \mathcal{U}^I
\end{align*}
where $\bm{u}_i \in \mathcal{U}$ is the input applied within the $i$th step, each for a duration of $\delta \tau = \nicefrac{T}{N}$. 
Following this parameterization of an input trajectory, we define the state
\begin{align}
    \bm{s}(t) := \begin{bmatrix}
        \xb_0(t) & \ub_0(t) & \dots & \ub_{N-1}(t)
    \end{bmatrix}^T \in \mathcal{S} \subseteq \mathcal{X} \times \mathcal{U}^N,
\end{align}
that uniquely describes the current planned trajectory $\mathcal{T}_x^I(t)$ since it only depends on the initial conditions $\bm{x}_0(t)$ and the input trajectory $\mathcal{T}_u^I(t)$. 

Next, we define a dynamical system that describes the evolution of $\bm{s}$. The state of the robotic system $\xb_0(t)$ is always the initial state for a planned trajectory $\mathcal{T}_x^I(t)$. Thus, its total derivative wrt. system time is given by the dynamics model~\eqref{eq:control_system_dynamics}, i.e. $\dot{\xb}_0(t)=\bm{F}({\xb}_0(t), {\ub}_0(t))$.
Since the control input trajectory is a decision variable that we can change over time, we model it as a single integrator, i.e. $\dot{\bm{u}}_i = \bm{v}_i$ where $\bm{v}_i$ is a virtual input. 

In summary, a trajectory state $\bm{s} \in \mathcal{S}$ evolves according to 
\begin{align}
    \dot{\bm{s}} &= \begin{bmatrix}
        \bm{F}(\bm{x}_0, \ub_0)\\
        \bm{0}
    \end{bmatrix} + \begin{bmatrix}
        \bm{0}\\
        \bm{I}
    \end{bmatrix}\bm{v}\\
    &= \bm{f}_s(\bm{s}) + \bm{g}_s(\bm{s}) \bm{v}
\end{align}
where $\bm{v} \in \mathbb{R}^{n_u \cdot N}$.
Interestingly, 
even though the dynamics of $\bm{x}$ do not need to be control affine, the dynamics of the trajectory state are 
affine in $\bm{v}$.

\subsection{State Constraints as Safe Set in Trajectory Space}
In this section, we describe how the safety constraints in Eq.~\eqref{eq:rhc_constraints} can be converted into a set $\mathcal{C}_h \subseteq \mathcal{S}$ that, if forward invariant, ensures that the original constraints hold. 
The main difficulty lies in the fact that safety constraints are defined over physical states $\xb$ while $\mathcal{S}$ only contains the initial physical state $\xb_0$ and a piecewise constant input trajectory $\mathcal{T}_u^I$.

Given a current trajectory state $\bm{s}$, we can leverage the mapping $\varphi$ defined in Eq.~\eqref{eq:ODEInt}, to map a state $\bm{s}$ to a physical state $\xb_{\tau}$ on the trajectory $\mathcal{T}_x^I$, reading
\begin{align}
    \xb_{\tau} = \varphi\left(\xb_0, \mathcal{T}_u^{[0, \tau]}\right) = \varphi(\bm{s}, \tau). \label{eq:integration}
\end{align}
Thus from a state $\bm{s}$ we can uniquely obtain a continuous time trajectory $\mathcal{T}_x^I$ by solving the ODE in Eq.~\eqref{eq:control_system_dynamics} given the initial state and the continuous-time input trajectory. More importantly, we can leverage efficient differentiable ODE solvers \cite{kidger2021hey} to also obtain the Jacobian $\nicefrac{\partial {\varphi}}{\partial \bm{s}}$.

Next, we can define a set in the trajectory configuration space $\mathcal{S}$ that encodes the constraints in Eq.~\eqref{eq:rhc_constraints} as
\begin{align*}
    \mathcal{C}_h = \left\{\bm{s} \in \mathcal{S} \mid h_i(\varphi(\bm{s}, \tau)) \geq 0 , \hspace{0.2cm} \forall \tau \in I, \forall i=0, \dots, m-1\right\}\label{eq:SafeSetExact}
\end{align*}
since the physical state $\bm{x}_{\tau}$ is explicitly expressed as a function of the state $\bm{s}$ through the mapping $\varphi$.
\begin{assumption}
    We consider a discretized solution of $\xb_{\tau}$ at times $\tau_0, \dots, \tau_{M-1}$ where $\tau_0 =0$ and $\tau_{M-1} = T$, respectively. We assume that it is sufficient to consider constraints at discrete time steps along the planned trajectory $\mathcal{T}_x^I$, i.e.
    \begin{align*}
        \mathcal{C}_h = \bigcap_{j=0}^{M-1} \left\{\bm{s} \in \mathcal{S} \mid h_i(\varphi(\bm{s}, \tau_j)) \geq 0, \hspace{0.2cm} \forall i = 0,\dots,m-1\right\}.
    \end{align*}
\end{assumption}
While this approximation does not recover the original safe set over continuous trajectories, we can get arbitrarily close by increasing the number of time steps considered. Further, we leave it for future work to find Lipschitz arguments to relax Assumption 1.

To ensure that a planned trajectory will satisfy safety constraints at all times, we formulate a CBF condition in trajectory configuration space $\mathcal{S}$.
\begin{lemma}
If for all $\bm{s} \in \mathcal{C}_h$, there exists a locally Lipschitz virtual input $\bm{v} \in \mathbb{R}^{n_u \cdot N}$ 
and a class-$\kappa$ function $\alpha_1$ such that $\forall i = 0,\dots, m-1$ (constraints) and $\forall j = 0,\dots, M-1$ (time steps)
\begin{equation}
\begin{aligned}
    \ddt\left(h_i(\varphi_j)\right) &= \frac{\partial h_i}{\partial \xb} \frac{\partial \varphi_j}{\partial \bm{s}} \left(\ddt \bm{s}\right)\\
    &= \frac{\partial h_i}{\partial \xb} \frac{\partial \varphi_j}{\partial \bm{s}} \left(\bm{f}_s(\bm{s}) + \bm{g}_s(\bm{s}) \bm{v}\right)\\
    &\geq - \alpha_1\left(h_i(\varphi_j)\right)
\end{aligned}
\label{eq:SafeDT}
\end{equation}
with $\varphi_j = \varphi(\bm{s}, \tau_j)$ holds, then $\mathcal{C}_h$ is forward invariant.
\end{lemma}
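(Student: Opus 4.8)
The plan is to reduce the lemma to an application of the standard CBF result already recorded as Theorem~\ref{thm:CBF}, applied not to a single barrier function but to the collection $\{h_i \circ \varphi_j\}_{i,j}$ viewed as barrier functions on the trajectory state space $\mathcal{S}$. First I would observe that $\mathcal{C}_h$, by Assumption~1, is a finite intersection of the $0$-superlevel sets of the composite functions $b_{ij}(\bm{s}) := h_i(\varphi(\bm{s},\tau_j))$, each of which is continuously differentiable in $\bm{s}$: indeed $h_i$ is $C^1$ by hypothesis, and $\varphi(\cdot,\tau_j)$ is $C^1$ in $\bm{s}$ because the flow of a $C^1$ vector field depends continuously differentiably on its initial condition and on the (finitely many, piecewise-constant) input parameters $\bm{u}_0,\dots,\bm{u}_{N-1}$ — this is exactly the differentiability noted after Eq.~\eqref{eq:integration}. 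Hence the chain rule computation in \eqref{eq:SafeDT} is justified, and $\tfrac{\mathrm d}{\mathrm dt} b_{ij}(\bm{s}) = \tfrac{\partial h_i}{\partial \xb}\tfrac{\partial \varphi_j}{\partial \bm{s}}(\bm{f}_s(\bm{s})+\bm{g}_s(\bm{s})\bm{v})$ along trajectories of the trajectory-space system $\dot{\bm{s}} = \bm{f}_s(\bm{s})+\bm{g}_s(\bm{s})\bm{v}$.

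Next I would set up the comparison-lemma / Nagumo-type argument that underlies Theorem~\ref{thm:CBF}. Fix an initial trajectory state $\bm{s}(0)\in\mathcal{C}_h$ and let $\bm{v}(t) = \bm{v}(\bm{s}(t))$ be the locally Lipschitz feedback whose existence is assumed; local Lipschitzness of $\bm{f}_s$ (inherited from $C^1$-ness of $\bm{F}$) and of $\bm{v}$ guarantees existence and uniqueness of the closed-loop solution $\bm{s}(t)$ on a maximal interval. For each pair $(i,j)$ define the scalar signal $\eta_{ij}(t) := b_{ij}(\bm{s}(t))$. The hypothesis \eqref{eq:SafeDT} says precisely $\dot\eta_{ij}(t) \geq -\alpha_1(\eta_{ij}(t))$ whenever $\bm{s}(t)\in\mathcal{C}_h$, i.e.\ whenever all $\eta_{kl}(t)\geq 0$. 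Applying the comparison lemma against the solution of $\dot y = -\alpha_1(y)$, $y(0)=\eta_{ij}(0)\geq 0$ — whose solutions starting nonnegative stay nonnegative for a class-$\kappa$ (hence locally Lipschitz, or at worst handled by the standard zeroing-CBF argument) $\alpha_1$ — yields $\eta_{ij}(t)\geq 0$ for all $t$ in the interval of existence. Doing this simultaneously for every $(i,j)$ keeps $\bm{s}(t)$ inside $\mathcal{C}_h$, which in turn is what makes the differential inequality valid at every instant, closing the bootstrap. A brief note should address that $\mathcal{C}_h$ is compact/closed so the solution does not escape in finite time, giving forward invariance for all $t\geq 0$.

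Unwinding the definition of $\mathcal{C}_h$, forward invariance then gives $h_i(\varphi(\bm{s}(t),\tau_j))\geq 0$ for all $i$, all $j$, and all $t\geq 0$; since $\varphi(\bm{s}(t),\tau_j)$ is exactly the state at trajectory-time $\tau_j$ of the trajectory currently planned at system-time $t$, this is the claim. I would present the argument mainly by reduction: state that $b_{ij}$ are valid zeroing CBFs for $(\bm{f}_s,\bm{g}_s)$ in the sense of Definition~\ref{def:CBF} (the common feedback $\bm{v}$ certifying all of them at once), and invoke Theorem~\ref{thm:CBF} componentwise, noting that the intersection of forward-invariant superlevel sets is forward invariant.

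The main obstacle I anticipate is not the CBF machinery — which is off-the-shelf once set up — but rigorously justifying that $\bm{s}\mapsto\varphi(\bm{s},\tau_j)$ is genuinely $C^1$ with a well-defined Jacobian $\partial\varphi_j/\partial\bm{s}$, since $\varphi$ is the flow under a \emph{piecewise-constant} input: one must argue differentiability across the switching times $\tau_1,\dots,\tau_{N-1}$ and that the variational equation can be integrated segment by segment (composing the per-segment sensitivity matrices). A secondary subtlety is the mild regularity needed on $\alpha_1$ for the comparison lemma to apply cleanly (the standard fix: $\alpha_1$ locally Lipschitz, or use the extended-class-$\kappa$ zeroing-CBF version), and ensuring the closed-loop solution exists on all of $[0,\infty)$ rather than merely a maximal interval — which follows because trajectories stay in the closed set $\mathcal{C}_h$ and, under a boundedness assumption on $\mathcal{S}$ or on $\bm{v}$, cannot blow up.
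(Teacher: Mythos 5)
Your proposal is correct and follows essentially the same route as the paper: the paper's proof is a one-line reduction stating that the argument "follows analogously to the proof of forward invariance of zeroing CBFs in Theorem~\ref{thm:CBF}" from \cite{ames2016control}, which is exactly your componentwise application of the standard zeroing-CBF/comparison argument to the composite functions $h_i(\varphi(\bm{s},\tau_j))$ on $\mathcal{S}$. The differentiability concern you flag about $\bm{s}\mapsto\varphi(\bm{s},\tau_j)$ under piecewise-constant inputs is precisely what the paper addresses in Remark~1, so your treatment is, if anything, more explicit than the paper's.
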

The proof follows analogously to the proof of forward invariance of zeroing CBFs in Theorem~\ref{thm:CBF}, proposed in \cite{ames2016control}. Importantly, the invariance of $\mathcal{C}_h$ implies that the inequality constraints over physical states in Eq.~\eqref{eq:rhc_constraints} holds at all times.
\begin{remark}
    While the continuous-time physical state trajectory $\mathcal{T}_x^I(\bm{s})$ can have points of non-smoothness, every point on the trajectory $\xb_{\tau_i}=\varphi(\bm{s}, \tau_i)$ is continuously differentiable with respect to the state $\bm{s}$. This is necessary for the invariance result and follows from the continuously differentiable dynamics model~\eqref{eq:control_system_dynamics}.
\end{remark}

\subsection{Actuation Constraints}
In order to ensure safety in real-world settings, we also need to consider actuation constraints of robots. In FITS, actuation constraints can be interpreted as sets in $\mathcal{S}$ that we want to render forward invariant since the control input $\bm{u}$ is part of the state $\bm{s}$.

Similar to the considered safety constraints, we can directly define sets in $\mathcal{S}$ that ensure actuation constraints to be satisfied, i.e.
\begin{align*}
    \mathcal{C}_{\mathrm{min}} &= \left\{\bm{s} \in \mathcal{S} \mid \bm{h}_{\mathrm{min}}(\bm{s}) \geq \bm{0}\right\}\\
    \mathcal{C}_{\mathrm{max}} &= \left\{\bm{s} \in \mathcal{S} \mid \bm{h}_{\mathrm{max}}(\bm{s}) \geq \bm{0}\right\},
\end{align*}
where $\bm{h}_{\mathrm{min}} = [
        (\ub_0 - \ub_{\mathrm{min}})~\dots~(\ub_{N-1} - \ub_{\mathrm{min}}) 
    ]^T$ and $\bm{h}_{\mathrm{max}} = [
        (\ub_{\mathrm{max}} - \ub_0)~\dots~(\ub_{\mathrm{max}} - \ub_{N-1}) 
    ]^T$.
Thus, to satisfy actuation constraints, we need to ensure that there exists a virtual input $\bm{v}$ such that $\forall k =0, \dots, n_u \cdot N-1$
\begin{equation}
\begin{aligned}
    \ddt\left(h_{\mathrm{min}}^{(k)}(\bm{s})\right) &= \frac{\partial h_{\mathrm{min}}^{(k)}}{\partial \bm{s}}\left(\bm{f}_s(\bm{s}) + \bm{g}_s(\bm{s}) \bm{v}\right)\\
    &\geq - \alpha_2\left(h_{\mathrm{min}}^{(k)}(\bm{s})\right)
\end{aligned}
\label{eq:ActuationDT}
\end{equation}
holds, where $\alpha_2$ is a class-$\kappa$ function and $h_{\mathrm{min}}^{(k)}$ is the $k$th entry of $\bm{h}_{\mathrm{min}}$. The condition for the maximum actuation constraints follows analogously.
\subsection{Incorporating an Objective} \label{sec:objective}
Contrary to standard CBF-QP formulations as in Eq.~\eqref{eq:QP}, FITS does not need an explicit reference controller and can directly incorporate a cost function that, e.g., encodes desired tracking performance.

For a given state $\bm{s}$, we can reconstruct both, the input trajectory $\mathcal{T}_u^I$ and the state trajectory $\mathcal{T}_x^I$. Thus, we can define an objective function over input and state trajectories as $J = J(\mathcal{T}_x^I, \mathcal{T}_u^I) = J(\bm{s})$. More importantly, if $J$ is differentiable, we can evaluate the time derivative of the objective function as 
\begin{equation}
\begin{aligned}
    \frac{\mathrm{d}}{\mathrm{d}t} J(\bm{s}) = \frac{\partial J}{\partial \bm{s}} \dot{\bm{s}}
    = \frac{\partial J}{\partial \bm{s}} \left(\bm{f}_s(\bm{s}) + \bm{g}_s(\bm{s}) \bm{v}\right),
\end{aligned}
\label{eq:CostDT}
\end{equation}
meaning that we can affect the control trajectory via $\bm{v}$ such that the objective will be minimized over time. Generally, the objective function can encode any desired behavior as function of states and inputs but we can also recover the same objective as in Eq.~\eqref{eq:QP} where the deviation of a reference controller is minimized.
\subsection{Control Synthesis in Trajectory Spaces}
This leads us to our final formulation of FITS in which we optimize for a virtual input $\bm{v}$ that minimizes an objective function over time while satisfying actuation and safety constraints.

Given an initial state $\bm{s}$ composed of the initial physical state $\xb_0(t_0)$ as well as an initial input trajectory $\mathcal{T}_u^I(t_0)$, we synthesize virtual inputs $\bm{v}$ by solving
\begin{equation}
\begin{aligned}
    \min_{\bm{v}}\quad & \bm{v}^T \bm{Q}\bm{v} + \frac{\partial J}{\partial \bm{s}}\bm{g}_s(\bm{s}) \bm{v}\\
    \textrm{s.t.~~} \quad & \frac{\partial h_i(\varphi_j)}{\partial \bm{s}} \left(\bm{f}_s(\bm{s}) + \bm{g}_s(\bm{s}) \bm{v}\right)
    \geq - \alpha_1\left(h_i(\varphi_j)\right)\\
     \quad & \frac{\partial h_{\mathrm{min}}^{(k)}}{\partial \bm{s}}\left(\bm{f}_s(\bm{s}) + \bm{g}_s(\bm{s}) \bm{v}\right)
    \geq - \alpha_2\left(h_{\mathrm{min}}^{(k)}(\bm{s})\right)\\
    \quad & \frac{\partial h_{\mathrm{max}}^{(k)}}{\partial \bm{s}}\left(\bm{f}_s(\bm{s}) + \bm{g}_s(\bm{s}) \bm{v}\right)
    \geq - \alpha_2\left(h_{\mathrm{max}}^{(k)}(\bm{s})\right)
\end{aligned}
\label{eq:QPTrajectorySpace}
\end{equation}
with $\varphi_j \coloneq \varphi(\bm{s}, \tau_j)$, $i=0,\dots, m-1$, $j=0,\dots, M-1$ and $k=0,\dots, n_u \cdot N-1$. Note that the optimization problem \eqref{eq:QPTrajectorySpace}
is quadratic in $\bm{v}$ and, thus, can be solved efficiently using off-the-shelf QP solvers. 
The additive quadratic term in $\bm{v}$ with a positive definite matrix $\bm{Q}$ reduces the rate of change of the input trajectory over time, inducing a smoother change in trajectories. 
To connect our method back to the problem statement, the QP in Eq.~\eqref{eq:QPTrajectorySpace} is a control law $\pi_v: \mathcal{S} \to \mathcal{V}^I$ that we propose as solution to Problem 2.




\section{Analysis}
We now have the necessary elements to discuss the theoretical properties of FITS.
\begin{theorem}
    Consider the dynamical system in Eq.~\eqref{eq:control_system_dynamics}, the safety constraints in Eq.~\eqref{eq:rhc_constraints} and the actuation constraints in Eq.~\eqref{eq:input_constraints}. If the control input sequence evolves according to
    \begin{align*}
        \dot{\bm{u}}_i = \bm{v}_i^*, \hspace{0.2cm} \forall i = 0,\dots, N-1
    \end{align*}
    where $\bm{v}^*$ is the solution to Eq.~\eqref{eq:QPTrajectorySpace}, then all inequality constraints \eqref{eq:SafeDT} and actuation constraints \eqref{eq:ActuationDT} will be satisfied at all times $t \geq t_0$, given that $\bm{s}(t_0) \in \mathcal{C}_h \cap \mathcal{C}_{\textrm{min}} \cap \mathcal{C}_{\textrm{max}}$.
\end{theorem}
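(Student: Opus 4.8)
The plan is to reduce Theorem~3 to Lemma~1 and the actuation-constraint condition \eqref{eq:ActuationDT}, which together already establish forward invariance of $\mathcal{C}_h$, $\mathcal{C}_{\mathrm{min}}$, and $\mathcal{C}_{\mathrm{max}}$ separately. The core observation is that the QP \eqref{eq:QPTrajectorySpace} is precisely constructed so that any feasible $\bm{v}$ — in particular the optimizer $\bm{v}^*$ — simultaneously satisfies the CBF inequalities \eqref{eq:SafeDT} for every constraint index $i$ and every time-step index $j$, as well as the actuation CBF inequalities \eqref{eq:ActuationDT} for both $\bm{h}_{\mathrm{min}}$ and $\bm{h}_{\mathrm{max}}$ and every component $k$. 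So once feasibility is in hand, the hypotheses of Lemma~1 are met along the closed-loop trajectory, and the analogue for the actuation sets follows by the same Nagumo/comparison-lemma argument referenced after Lemma~1.

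Concretely, the steps I would carry out are: (i) note that the closed-loop dynamics $\dot{\bm{s}} = \bm{f}_s(\bm{s}) + \bm{g}_s(\bm{s})\bm{v}^*(\bm{s})$ is well-posed on a maximal interval, invoking Remark~2 to guarantee that each $h_i(\varphi_j)$ and each $h_{\mathrm{min}}^{(k)}, h_{\mathrm{max}}^{(k)}$ is continuously differentiable in $\bm{s}$, hence absolutely continuous in $t$ along solutions; (ii) since $\bm{s}(t_0) \in \mathcal{C}_h \cap \mathcal{C}_{\mathrm{min}} \cap \mathcal{C}_{\mathrm{max}}$, the point is feasible for \eqref{eq:QPTrajectorySpace}, so $\bm{v}^*$ exists there and, by continuity of the data, on a neighborhood; (iii) apply Lemma~1 with the input $\bm{v} = \bm{v}^*(\bm{s})$ to conclude $h_i(\varphi_j(\bm{s}(t))) \ge 0$ for all admissible $i,j$ and all $t \ge t_0$; (iv) run the identical comparison-lemma argument on \eqref{eq:ActuationDT} (and its $\bm{h}_{\mathrm{max}}$ counterpart) to conclude $h_{\mathrm{min}}^{(k)}(\bm{s}(t)) \ge 0$ and $h_{\mathrm{max}}^{(k)}(\bm{s}(t)) \ge 0$ for all $t \ge t_0$; and (v) translate back: $h_i(\varphi(\bm{s}(t), \tau_j)) \ge 0$ is exactly \eqref{eq:rhc_constraints} at the sampled $\tau_j$ under Assumption~1, and $h_{\mathrm{min}}^{(k)}, h_{\mathrm{max}}^{(k)} \ge 0$ is exactly \eqref{eq:input_constraints}.

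The main obstacle is the implicit feasibility and regularity assumption: Theorem~3 as stated presupposes (via the phrasing "given that $\bm{s}(t_0) \in \mathcal{C}_h \cap \mathcal{C}_{\mathrm{min}} \cap \mathcal{C}_{\mathrm{max}}$") that the QP remains feasible along the trajectory, and that $\bm{v}^*(\bm{s})$ is locally Lipschitz so that the closed loop has a unique solution — the latter is assumed in the statement of Lemma~1 but is not automatic for a QP optimizer without a constraint qualification (e.g. linear independence of the active CBF gradients). I would therefore state explicitly that we assume recursive feasibility of \eqref{eq:QPTrajectorySpace} and that a measurable/locally Lipschitz selection $\bm{v}^*$ exists, which is the standing assumption inherited from the classical CBF-QP result (Theorem~1 and \cite{ames2016control}); everything else is then a direct invocation of Lemma~1 and the comparison lemma. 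The only genuinely new content over Lemma~1 is the bookkeeping that the \emph{single} QP enforces all three families of constraints at once, so no conflict arises between the safety and actuation CBF conditions beyond feasibility, which the hypothesis grants.
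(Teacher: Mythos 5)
Your proposal follows essentially the same route as the paper: the QP constraints in Eq.~\eqref{eq:QPTrajectorySpace} are exactly the CBF conditions \eqref{eq:SafeDT} and \eqref{eq:ActuationDT}, so any solution $\bm{v}^*$ satisfies them pointwise and forward invariance of $\mathcal{C}_h$, $\mathcal{C}_{\mathrm{min}}$, $\mathcal{C}_{\mathrm{max}}$ follows from Lemma~1 and the zeroing-CBF result of Theorem~\ref{thm:CBF}. Your explicit caveats about recursive feasibility and a locally Lipschitz selection of $\bm{v}^*$ correspond to the paper's own ``if the QP is always feasible'' hypothesis (feasibility being deferred to Theorem~\ref{thm:feasibility}), so they refine rather than change the argument.
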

\begin{proof}
    If the QP is always feasible, Eq.~\eqref{eq:SafeDT} always holds, implying that the set $\mathcal{C}_h$ is forward invariant. Similarly, the sets defining the actuation constraints, $\mathcal{C}_{\mathrm{min}}$ and $\mathcal{C}_{\mathrm{max}}$, are rendered forward invariant through Eq.~\eqref{eq:ActuationDT} and, thus, actuation constraints are satisfied at all times, see Thm.~\ref{thm:CBF}.
\end{proof}
\subsection{Feasibility of the QP}

In FITS, proving recursive feasibility is different than in general input-constrained CBF-QPs. We have to show that multiple safe sets are control-sharing since all sets are defined in the same space and can be affected via $\bm{v}$. Here, control-sharing means that the individual CBF conditions of safety and actuation are non-conflicting.  However, so far there is only work for single input systems that rigorously provides a controller with feasibility guarantees for multiple defined safe sets \cite{xu2018constrained}. 

Below, we show the necessary conditions for the QP to be feasible.
\begin{theorem}
\label{thm:feasibility}
    The QP in Eq.~\eqref{eq:QPTrajectorySpace} is always feasible if and only if
    \begin{align}
        \sum_k \lambda_k \frac{\partial h_k}{\partial \bm{s}} \bm{g}_s = \bm{0} \implies  \sum_k \lambda_k \left(\frac{\partial h_k}{\partial \bm{s}} \bm{f}_s + \alpha(h_k(\bm{s}))\right) \geq 0\label{eq:thmQP}
    \end{align}
    holds for each individual constraint in Eq.~\eqref{eq:QPTrajectorySpace}, i.e. $k=1,\dots,m\cdot M+ 2\cdot n_u \cdot N$ and for all $\bm{s} \in \mathcal{C}_h \cap \mathcal{C}_{\textrm{min}} \cap \mathcal{C}_{\textrm{max}}$ and for all $\lambda_k \geq 0$.
\end{theorem}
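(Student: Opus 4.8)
\emph{Proof proposal.} The plan is to treat this as a theorem of the alternative (Farkas/Gale) for a finite system of linear inequalities, since the strictly convex quadratic objective in \eqref{eq:QPTrajectorySpace} is irrelevant to feasibility: with $\bm{Q}\succ 0$ a minimizer exists and is unique precisely when the feasible polyhedron is nonempty. So I would first rewrite all $mM+2n_uN$ inequality constraints of \eqref{eq:QPTrajectorySpace} in the compact form $\bm{A}(\bm{s})\,\bm{v}\geq\bm{b}(\bm{s})$, where, letting $h_k$ range over the safety functions $h_i(\varphi_j)$ and the actuation functions $h_{\mathrm{min}}^{(k)},h_{\mathrm{max}}^{(k)}$ with $\alpha$ the associated class-$\kappa$ function, the $k$-th row of $\bm{A}(\bm{s})$ is $\frac{\partial h_k}{\partial\bm{s}}\bm{g}_s(\bm{s})$ and the $k$-th entry of $\bm{b}(\bm{s})$ is $-\frac{\partial h_k}{\partial\bm{s}}\bm{f}_s(\bm{s})-\alpha(h_k(\bm{s}))$. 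All these quantities are well defined on $\mathcal{C}_h\cap\mathcal{C}_{\mathrm{min}}\cap\mathcal{C}_{\mathrm{max}}$ by the continuous differentiability assumed of $\bm{F}$ and of the $h_i$, together with differentiability of the flow $\varphi$.

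Next I would invoke the affine Farkas lemma in its form for a free decision variable: the system $\bm{A}\bm{v}\geq\bm{b}$ has a solution $\bm{v}\in\mathbb{R}^{n_uN}$ if and only if every $\bm{\lambda}\geq\bm{0}$ with $\bm{A}^{T}\bm{\lambda}=\bm{0}$ satisfies $\bm{b}^{T}\bm{\lambda}\leq 0$. Substituting the identifications above, $\bm{A}^{T}\bm{\lambda}=\bm{0}$ becomes $\sum_k\lambda_k\frac{\partial h_k}{\partial\bm{s}}\bm{g}_s=\bm{0}$ and $\bm{b}^{T}\bm{\lambda}\leq 0$ becomes $\sum_k\lambda_k\big(\frac{\partial h_k}{\partial\bm{s}}\bm{f}_s+\alpha(h_k(\bm{s}))\big)\geq 0$, which is exactly the implication \eqref{eq:thmQP}; the trivial case $\bm{\lambda}=\bm{0}$ gives $0\geq 0$ and may harmlessly remain in the quantifier $\lambda_k\geq 0$. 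Finally, ``always feasible'' means feasible at every $\bm{s}\in\mathcal{C}_h\cap\mathcal{C}_{\mathrm{min}}\cap\mathcal{C}_{\mathrm{max}}$ --- the set whose forward invariance is at stake, and in which the closed loop remains whenever $\bm{s}(t_0)$ lies there --- so I would apply the pointwise equivalence at each such $\bm{s}$ and take the conjunction over $\bm{s}$, obtaining the stated biconditional.

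The only genuine care needed is in stating the theorem of the alternative correctly: because $\bm{v}$ is unconstrained, the dual multipliers must satisfy the \emph{equality} $\bm{A}^{T}\bm{\lambda}=\bm{0}$ (not an inequality), which is what produces the antecedent in \eqref{eq:thmQP}; and because every constraint is non-strict, the feasible set is a closed polyhedron, so Farkas applies with no constraint-qualification caveat. I do not expect a hard technical obstacle in the argument itself --- the content of the theorem is the characterization. What remains genuinely difficult, and which I would flag but not attempt, is \emph{verifying} \eqref{eq:thmQP} for a given system, i.e.\ establishing that the safety and actuation CBF conditions are control-sharing, which beyond single-input systems is open.
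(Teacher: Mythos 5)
Your proposal is correct and takes essentially the same route as the paper: the paper likewise reduces feasibility of \eqref{eq:QPTrajectorySpace} to a theorem of the alternative for the linear system in $\bm{v}$, phrased through boundedness of the Lagrange dual of the feasibility problem rather than by citing the affine Farkas lemma directly, and then extends pointwise feasibility to all $t\geq t_0$ via forward invariance of $\mathcal{C}_h \cap \mathcal{C}_{\mathrm{min}} \cap \mathcal{C}_{\mathrm{max}}$. If anything, your version is slightly more complete, since it makes the ``only if'' direction explicit, which the paper's argument leaves implicit.
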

\begin{proof}
    Consider the feasibility problem of the QP in Eq.~\eqref{eq:QPTrajectorySpace} which is characterized by the Lagrangian
    \begin{align*}
        \mathcal{L}(\bm{s}, \bm{v}, \bm{\lambda}) = -\sum_k \lambda_k \left(\frac{\partial h_k}{\partial \bm{s}} \bm{g}_s \bm{v} + \frac{\partial h_k}{\partial \bm{s}} \bm{f}_s + \alpha(h_k(\bm{s}))\right).
    \end{align*}
    If there exists a $\bm{v} \in \mathbb{R}^{n_u \cdot N}$ for all non-negative Lagrange multipliers $\bm{\lambda}\geq \bm{0}$ such that the complementary slackness condition $\mathcal{L}(\bm{s}, \bm{v}, \bm{\lambda})\leq 0$ is satisfied, the QP is feasible for a given state $\bm{s} \in \mathcal{S}$ \cite{boyd2004convex}. 
    
    If Eq.~\eqref{eq:thmQP} holds, it follows directly that $\mathcal{L} \leq 0$ if $\sum_k \lambda_k (\nicefrac{\partial h_k}{\partial \bm{s}}) \bm{g}_s = \bm{0}$ holds. Otherwise, it follows that the Lagrange dual function results in $\mathrm{inf}_{\bm{v}}~ \mathcal{L}(\bm{s}, \bm{v}, \bm{\lambda}) = - \infty$ which concludes that Eq.~\eqref{eq:thmQP} implies feasibility of the QP for a given state $\bm{s}$. Since Eq.~\eqref{eq:thmQP} holds for all $\bm{s} \in \mathcal{C}_h \cap \mathcal{C}_{\textrm{min}} \cap \mathcal{C}_{\textrm{max}}$, the QP is feasible for all $t \geq t_0$ given that $\bm{s}(t_0) \in \mathcal{C}_h \cap \mathcal{C}_{\textrm{min}} \cap \mathcal{C}_{\textrm{max}}$ since $\mathcal{C}_h \cap \mathcal{C}_{\textrm{min}} \cap \mathcal{C}_{\textrm{max}}$ is forward invariant (see Thm.~2).
\end{proof}
In words, Thm.~\ref{thm:feasibility} requires the uncontrolled solution of $\bm{s}$ according to the drift term $\bm{f}_s(\bm{s})$ to still satisfies the CBF conditions in critical points in which the effect of the virtual input $\bm{v}$ on the constraint function vanishes.

\subsection{Discrete-time Implementation}
\label{sec:DiscreteTime}
So far, we have assumed that the control synthesis problem in trajectory spaces can be solved in continuous time. However, although QPs can be solved efficiently, there will always be a limited control rate of the presented method. Therefore, we draw motivation from existing works on CBFs in sampled-data systems \cite{breeden2021control}. 


Assuming a zero-order-hold (ZOH) controller, i.e. $\forall t \in [t_k, t_k + \delta t)$
\begin{align*}
    \dot{\mathcal{T}}_u(t) = \left\{\bm{v}_i(t_k)\mid \forall \tau \in [\tau_i, \tau_{i+1}), \hspace{0.1cm} \forall i = 0, \dots, N-1\right\},
\end{align*}
we can modify the obtained CBF conditions to account for the worst-case evolution in between time steps $\delta t$. To that end, we can modify the right-hand-side (RHS) of Eq. \eqref{eq:SafeDT} and \eqref{eq:ActuationDT} to include an additional term of the form
\begin{align*}
    RHS := - \alpha \left(h(\bm{s})\right) + \nu\left(\delta t, \bm{s}\right)
\end{align*}

where $\nu$ is typically a function of Lipschitz constants of the dynamical system as well as the sampling time $\delta t$. One instantiation of $\nu$ is presented in \cite{breeden2021control} based on global Lipschitz constants of dynamics and constraint functions.

Although this approach enforces the forward invariance, the Lipschitz constants for high-dimensional trajectory space dynamics are generally hard to compute and lead to overly conservative behavior.

In practice, we aim to ensure that $h(\bm{s}(t_k+\delta t)) \geq 0$ for any $t_k\geq t_0$ given $h(\bm{s}(t_0)) \geq 0$. Approximating $h(\bm{s}(t_k+\delta t))$ by a first order estimate and using $h(t) = h(\bm{s}(t))$, we obtain
\begin{align*}
    h(t_k+\delta t) \approx h(t_k) + \dot{h}(t_k) \delta t  \geq 0 \\ \Leftrightarrow 
    \dot{h}(t_k)  \geq - \frac{1}{\delta t}h(t_k),
\end{align*}
where the estimate becomes exact as $\delta t \to 0$. When the approximation is \textit{good enough} with a given $\delta t>0$, we obtain an upper bound for the class-$\kappa$ functions $\alpha(x)$ in the inequality constraints \eqref{eq:SafeDT} and \eqref{eq:ActuationDT} as
\begin{align*}
    \dot{h}(t_k) \geq - \alpha(h(t_k)) \geq - \frac{1}{\delta t}h(t_k).
\end{align*}

Therefore, with real-time control ($\delta t \leq \SI{0.01}{\second}$) of mechanical systems that tend to be much slower, choosing
    $\alpha(h) \ll \frac{1}{\delta t}h$
is likely a safe choice.

\subsection{Limitations}
\label{sec:Limitations}
Our proposed method has two main limitations: 1) The provided guarantees hold in the limit as the replanning step approaches zero and 2) our method can be considered to be a shooting method, thus, FITS can have high sensitivity in the gradients for long planning horizons. 

As discussed in section~\ref{sec:DiscreteTime}, we can obtain Lipschitz arguments to account for the worst-case evolution assuming a ZOH discretization to address the first limitation.
While this seems promising, it is also difficult to find tight Lipschitz arguments for these high-dimensional systems. 

For the latter limitation, we can draw motivation from multiple shooting optimization methods which address the problem of gradient sensitivity. 
It might be possible to split the overall optimization problem into multiple smaller ones and enforce continuity between the optimized trajectories.
We are planning to address both limitations in future work.

\section{Connections to I-CBFs and NMPC}
In this section, we discuss how FITS is related to different control strategies, namely Integral CBFs (I-CBFs) and Sequential Quadratic Programming (NMPC-SQP). We present different angles on the control synthesis problem and position FITS in relation to other methods.
\subsection{Integral Control Barrier Functions}
I-CBFs\cite{ames2020integral} were introduced for the coupled dynamical system
\begin{align*}
    \begin{bmatrix}
        \dot{\xb}\\
        \dot{\ub}
    \end{bmatrix} = \begin{bmatrix}
        \bm{F}(\xb, \ub)\\
        \phi(\xb, \ub, t) + \bm{v}
    \end{bmatrix}
\end{align*}
where $\ub, \bm{v} \in \mathbb{R}^{n_u}$ and $\phi$ is a general feedback control law. Similar to FITS, the authors rely on a control law through a virtual input $\bm{v}$. The main difference to FITS lies in the fact that we consider an input trajectory $\mathcal{T}^I_u$ instead of just one control input. Therefore, our method overcomes the short-sightedness of I-CBFs by incorporating a planning horizon.

The authors in \cite{ames2020integral} formulate a QP with $\bm{v}$ as decision variable to ensure forward invariance of a safe set $\mathcal{C}$, also incorporating input constraints. In FITS, we can view the unconstrained minimization of $\dot{J}(\bm{s})$ in Eq.~\eqref{eq:CostDT} as the general feedback law $\phi$ which describes a gradient descent policy on the objective function $J$. We recover the exact I-CBF formulation by reducing the planning horizon to a single control input $\ub_0$, resulting in a myopic control law.

\subsection{Sequential Quadratic Programming for NMPC}
The discrete implementation of FITS is tightly connected to real-time iteration NMPC-SQP methods \cite{zanelli2021inexact, numerow2024inherently, gros2020linear}. Both are solving a quadratic program at each iteration. FITS solves for rates; SQP solves for deltas.
One way how discrete FITS steps generalize SQP iterates, is that the latter use linearized constraints which resemble the CBF inequality constraints in Eq.~\eqref{eq:SafeDT} with a fixed class-$\kappa$ function $\alpha(h)=1 \cdot h$.

While SQP iterates are not ensured to remain safe, its iterates are equivalent to the discrete FITS steps when using a constant symmetric positive definite (SPD) hessian approximation and the above choice of class-$\kappa$ function. 
Following the analysis in Sec.~\ref{sec:DiscreteTime}, our work provides insight into when real-time iteration NMPC-SQP methods ensure safety in practice, since the iteration time is low and the linearized constraint function resembles a CBF constraint.

\section{Experiments}
In this section, we evaluate FITS in safety-critical scenarios in which a planning horizon is crucial. Specifically, we show
\begin{enumerate}
    \item adherence to safety specifications while not sacrificing performance in Sec.~\ref{sec:Quadrotor}
    \item that FITS overcomes the purely reactive behavior of CBFs in Sec.~\ref{sec:Clutter}.
    \vspace{-0.3cm}
\end{enumerate}
\begin{figure}[t]
    \centering
    \includegraphics[scale=0.55]{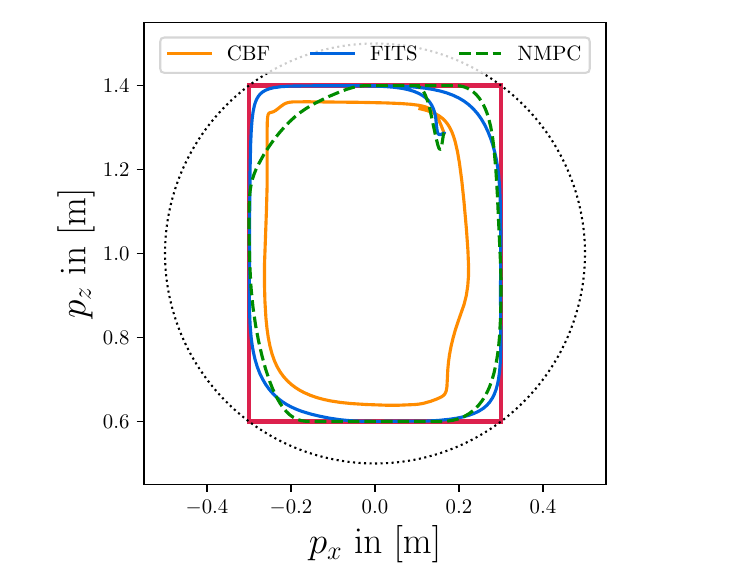}
    \caption{A two-dimensional quadrotor tracking task. The dotted black circle depicts the circular trajectory that we want to track while the red rectangle shows the constrained area that we want to remain in.}
    \label{fig:Tracking}
    \vspace{-0.6cm}
\end{figure}
\subsection{Setup}
To analyze FITS, we run experiments in the common benchmark simulation environment \texttt{safe-control-gym} \cite{yuan2022safe} which offers multiple scenarios for control of nonlinear systems under safety constraints\footnote{Code available at \href{https://github.com/mattivahs/FITS}{https://github.com/mattivahs/FITS}}. Specifically, we consider a quadrotor tracking task as well as a custom designed navigation task in a cluttered environment.

We consider two baselines, namely (CBF) a vanilla second order CBF in combination with an LQR reference controller and (NMPC) a nonlinear model predictive controller that recursively solves an optimization problem to obtain a discrete-time control sequence. We use the available implementation in \cite{yuan2022safe} which is based on CasADi's \texttt{opti} framework to formulate the optimization problem which is then solved with the interior point method IPOPT.

For our method, we use JAX \cite{jax2018github} for efficient auto differentiation to implement the differentiable integrator $\varphi$ in Eq.~\eqref{eq:ODEInt} as an Euler integration scheme. 
Further, we use \texttt{cvxopt} to solve the QP in Eq.~\eqref{eq:QPTrajectorySpace}. All calculations (including auto differentiation) are carried out on an Intel i9-10900K processor and 32 GB of RAM.
\vspace{-0.1cm}
\subsection{Quadrotor Geofencing}
\label{sec:Quadrotor}
In the first experiment, we consider a circular trajectory tracking task of a planar quadrotor subject to a geofencing constraint of staying within a rectangular area as shown in Fig.~\ref{fig:Tracking}. The state of the robot is given as $\xb = [p_x~p_z~\theta~\dot{p}_x~\dot{p}_z~\dot{\theta}]^T$ and the control inputs are two thruster forces $\ub=[F_1~F_2]^T$. A detailed description of the equations of motion can be found in \cite{yuan2022safe}. The objective is of linear quadratic form
\begin{align}
    J\left(\mathcal{T}_x^I, \mathcal{T}_u^I\right) = \int_0^T \Delta \xb_{\tilde{\tau}}^T \bm{Q}_J \Delta \xb_{\tilde{\tau}} + \Delta \ub_{\tilde{\tau}}^T \bm{R}_J \Delta \ub_{\tilde{\tau}} \text{d}\tilde{\tau}\label{eq:QuadraticCost}
\end{align}
where $\Delta \xb_{\tilde{\tau}}= \xb_{\tilde{\tau}} - \xb_{\textrm{ref}}$ and $\Delta \ub_{\tilde{\tau}} = \ub_{\tilde{\tau}} - \ub_{\textrm{ref}}$, respectively. The safety objective is encoded as
\begin{align}
    h_i(\xb) = \bm{a}_i^T \xb - b_i \geq 0, \hspace{0.2cm} i = 1,\dots, 4 \label{eq:SafeSetSim1}
\end{align}
which represent the four halfspace constraints and actuation constraints are set to $F_i \in [0.056, 0.297]~\text{N}$. For the CBF baseline, we use the CBF design proposed in \cite{wu2016safety} which specifically deals with obstacle avoidance for planar quadrotors by explicitly including an angular dependency.

In both, NMPC and FITS, we optimize over $N = 20$ control steps where each control input is applied for $\delta \tau = 1/100 \text{s}$. 
The continuous state trajectory in FITS is discretized into $M=40$ states and the class-$\kappa$ functions in Eq.~\eqref{eq:QPTrajectorySpace} are chosen as $\alpha_1(x)=\alpha_2(x)=5x$ which satisfy the practical conditions obtained in Sec.~\ref{sec:DiscreteTime}. In experiments, we noticed that NMPC sometimes becomes infeasible which is why we had to introduce slack variables on the constraints in Eq.~\eqref{eq:SafeSetSim1}. 
{\renewcommand{\arraystretch}{1.1}
\begin{table}[t]
    \caption{Comparison of FITS to baselines.}
    \resizebox{0.47\textwidth}{!}{\begin{tabular}{c | ccccc}
                   & \begin{tabular}[c]{@{}c@{}}RMSE in [m] \\ $\mu \pm \sigma$\end{tabular} & \begin{tabular}[c]{@{}c@{}}$t_{\text{comp}}$  in [ms] \\ $(\mu \pm \sigma)$\end{tabular} & \begin{tabular}[c]{@{}c@{}} $\lVert\bm{u}\rVert_2$ in [N]\\ $(\mu \pm \sigma)$\end{tabular} & \#violations &
                   \begin{tabular}[c]{@{}c@{}} $h_{\textrm{min}}$\\ in [mm]\end{tabular}\\
                   \hline\hline
    FITS  &     \begin{tabular}[c]{@{}c@{}}$0.157 \pm$ \\ $0.075$\end{tabular}      &   \begin{tabular}[c]{@{}c@{}}$3.2 \pm$ \\ 2.5\end{tabular}    &      \begin{tabular}[c]{@{}c@{}}$0.189 \pm$ \\ $0.032$\end{tabular}     &      0  &0.2 \\
    \hline
    CBF &     \begin{tabular}[c]{@{}c@{}}$0.27 \pm$ \\ $0.04$\end{tabular}       &    \begin{tabular}[c]{@{}c@{}}$1.22 \pm$ \\ 0.3\end{tabular}  &      \begin{tabular}[c]{@{}c@{}}$0.21 \pm$ \\ $0.07$\end{tabular}    & 0& 43\\
    \hline
    NMPC   &      \begin{tabular}[c]{@{}c@{}}$0.153 \pm$ \\ $0.074$\end{tabular}    &   \begin{tabular}[c]{@{}c@{}}$98.3 \pm$ \\ 7.8\end{tabular}  &    \begin{tabular}[c]{@{}c@{}}$0.188 \pm$ \\ $0.06$\end{tabular}     &    27&-0.09
    \end{tabular}}
    \label{tab:simresults}
    \vspace{-0.5cm}
\end{table}
}

\vspace{-0.3cm}
Figure~\ref{fig:Tracking} shows the resulting trajectories for FITS and the baselines. FITS and CBF can ensure safety at all times which is further supported by Table~\ref{tab:simresults}. However, CBF is overly conservative and does not approach the boundaries of the safe set which results in poor tracking performance as indicated by a large root mean squared error (RMSE) on the tracked circular trajectory.
While it looks like NMPC satisfies safety constraints, violation occurs numerous times due to the slack variables in the optimization problem  with a maximum violation of $0.09$\si{mm}. 
FITS satisfies safety~and input constraints at all times while achieving similar performance to NMPC in terms of tracking, as indicated by the RMSE, and control effort evaluated as the mean applied force.

Apart from adhering to safety specifications, we observed the following two points that are worth highlighting:
\begin{enumerate}
    \item In the design process of safety functions for FITS, we neither explicitly accounted for higher relative degrees nor for the underactuation of the system since we directly used Eq.~\eqref{eq:SafeSetSim1} as candidate functions. The baseline CBF proposed in \cite{wu2016safety} is a second order CBF modified for the specific use case of planar quadrotors.
    \item The computation time of FITS was an order of magnitude lower than NMPC and could be run at 200 Hz since it only involved solving a single QP.
\end{enumerate}
\subsection{Navigation in Cluttered Environments}
\label{sec:Clutter}
In this section, we consider a navigation task of a double integrator system that is described by the dynamical system $[\dot{p}_x~\dot{p}_y~\ddot{p}_x~\ddot{p}_y]^T= [\dot{p}_x~\dot{p}_y~u_x~u_y]^T$. In this scenario, the robot has a goal reaching task while avoiding 30 circular obstacles as depicted in Fig.~\ref{fig:Clutter}. The cost is defined as in Eq.~\eqref{eq:QuadraticCost} and safety is encoded as
\begin{align}
    h_i\ofx = \lVert [p_x~p_y]^T - \bm{p}_o^{(i)} \rVert_2 - r_o^{(i)}, \hspace{0.3cm} \forall i = 1,\dots, 30\label{eq:SafetySpecSim2}
\end{align}
where each obstacle is defined by its position $\bm{p}_o$ and its radius $r_o$. The actuation limits are $u_x, u_y \in [-1, 1]$ and the simulation time is $t_{\textrm{sim}}=8$ seconds. 

In this simulation, we consider two CBFs with different class-$\kappa$ functions $\alpha_1(x) = 20x$ and $\alpha_2(x)=2x$. It can be observed that FITS and NMPC are able to reach the goal in the limited time frame while both CBFs get stuck. Interestingly, the more conservative CBF $\alpha_2$ avoids getting stuck directly but moves slowly through the environment because the low value of $\alpha_2$ results in more conservative behavior. NMPC, on the other hand, manages to find a path through the obstacles but violates safety specifications multiple times due to slack variables in the optimization problem. Specifically, the minimum value of the safety functions in Eq.~\eqref{eq:SafetySpecSim2} is $h=- 0.026$\si{m} while both, CBF and FITS, keep the state safe at all times.
\section{Conclusion and Future Work}
In this work, we introduce forward invariance in trajectory spaces (FITS) as an approach to safety-critical control. 
We model planned trajectories as a dynamical system in a trajectory configuration space $\mathcal{S}$ and define safe sets in this space that encode safety of physical states and actuation constraints. We find an efficient QP formulation for continuous-time receding horizon control by defining CBFs in the space $\mathcal{S}$ that guarantees safe evolution of planned trajectories. 
Consequently, by incorporating a planning horizon, we overcome purely reactive behavior of standard CBF-QP formulations. Experimental results indicate that we achieve adherence to safety specifications while offering a considerable speedup in computation time compared to an interior point NMPC. 
Future work will focus on the limitations pointed out in Sec.~\ref{sec:Limitations}, namely the problem of missing rigorous guarantees for discrete-time implementations and on the gradient sensitivity of FITS for long planning horizons.
Lastly, with FITS, we show that CBFs are not limited to only physical state spaces and can be extended to alternative domains. We believe this perspective can be highly useful in applications beyond safety-critical control. 
\begin{figure}[t]
    \centering
    \includegraphics[scale=0.67]{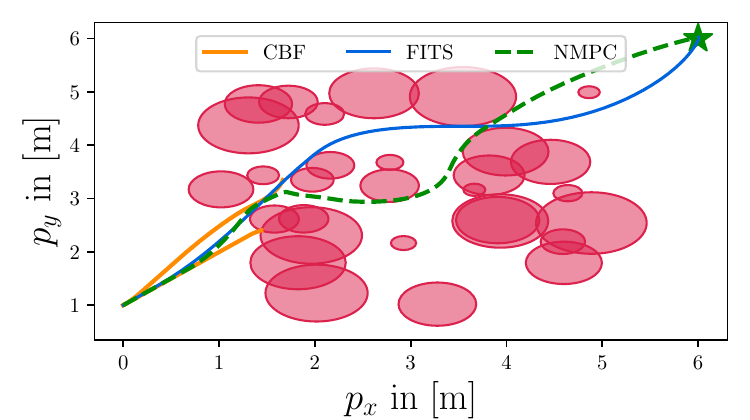}
    \caption{A two-dimensional navigation task. Obstacles are shown by red circles while the goal position is indicated by the green star.}
    \label{fig:Clutter}
    \vspace{-0.55cm}
\end{figure}
\balance
\hypersetup{urlcolor=black}
\bibliographystyle{IEEEtran}
\bibliography{references.bib}

\end{document}